\renewcommand\cite{\citep}
\newcommand*{\affaddr}[1]{#1} 
\newcommand*{\affmark}[1][*]{\textsuperscript{#1}}
\newcommand{\MyMapTemplatePrefixc}[4]{\expandafter#1\csname#3#4\endcsname{#2{#4}}} 
\newcommand{\MyMapTemplatePrefixtb}[5]{\expandafter#1\csname#4#5\endcsname{#2{#3{#5}}}} 
\newcommand{\MyMapTemplateNoPrefix}[3]{\expandafter#1\csname#3\endcsname{#2{#3}}}
\newcommand{\itk}{_k}
\newcommand{\kp}{_{k+1}}
\newcommand{\km}{_{k-1}}
\def\ga{\alpha}
\def\gb{\beta}
\def\ge{\epsilon}
\def\gl{\lambda}
\def\gt{\theta}
\def\gD{\Delta}
\def\hga{\hat{\alpha}}
\def\hgb{\hat{\beta}}
\def\hgl{\hat{\lambda}}
\newcommand{\tabincell}[2]{\begin{tabular}{@{}#1@{}}#2\end{tabular}}
\def\bbR{{\mathbb R}}
\def\bbN{{\mathbb N}}
\def\st{\mbox{subject to~~}}
\newcommand{\inprod}[1]{\langle #1 \rangle}
\newtheorem{proposition}{Proposition}
\crefname{rl}{Rule}{Rule}
\newtheorem{condition}{Condition}
\crefname{condition}{Condition}{Condition}
\begin{document}

%

%
\runningauthor{Zheng Xu, Mario Figueiredo, Tom Goldstein}

\twocolumn[

\aistatstitle{Adaptive ADMM with Spectral Penalty Parameter Selection}

%

\aistatsauthor{
Zheng Xu\affmark[1],\quad M\'{a}rio A. T. Figueiredo\affmark[2],\quad Tom Goldstein\affmark[1]\\
\affaddr{\affmark[1]Department of Computer Science, University of Maryland, College Park, MD}\\
\affaddr{\affmark[2]Instituto de Telecomunica\c{c}\~{o}es, Instituto Superior T\'{e}cnico, Universidade de Lisboa, Portugal}\\
}
\vspace{0.5cm}
]

\begin{abstract}
The \textit{alternating direction method of multipliers} (ADMM) is a versatile tool for solving a wide range of constrained optimization problems.
However, its performance is highly sensitive to a penalty parameter,  making ADMM often unreliable and hard to automate for a non-expert user.  We tackle this weakness of ADMM by proposing a method that adaptively tunes the penalty parameter to achieve fast convergence. The resulting {\it adaptive ADMM} (AADMM)  algorithm, inspired by the  successful Barzilai-Borwein spectral method for gradient descent, yields fast convergence and relative insensitivity to the initial stepsize  and problem scaling.
\end{abstract}

\section{Introduction}
\label{sec:intro}

The \textit{alternating direction method of multipliers} (ADMM) is an invaluable element of the modern optimization toolbox.
ADMM decomposes complex optimization problems into sequences of simpler subproblems, often solvable in closed form; its simplicity, flexibility, and broad applicability, make ADMM a state-of-the-art solver in machine learning, signal processing, and many other areas \citep{boyd2011admm}.


It is well known that the efficiency of ADMM hinges on the careful selection of a \textit{penalty parameter}, which needs to be manually tuned by users for their particular problem instances.  In contrast, for gradient descent and proximal-gradient methods, adaptive (\textit{i.e.} automated) stepsize selection rules have been proposed, which essentially dispense with user oversight and dramatically boost performance \citep{barzilai1988two,fletcher2005barzilai,goldstein2014field,wright2009sparse,zhou2006gradient}.


In this paper, we propose to automate and speed up ADMM by using stepsize selection rules adapted from the gradient descent literature, namely the Barzilai-Borwein  ``spectral'' method for smooth unconstrained problems \citep{barzilai1988two,fletcher2005barzilai}.   Since ADMM handles multi-term objectives and linear constraints, it is not immediately obvious how to adopt such rules.  The keystone of our approach is to analyze the dual of the ADMM problem, which can be written without constraints.  To ensure  reliability of the method, we develop a correlation criterion that safeguards it against inaccurate stepsize choices.  The resulting {\it adaptive ADMM} (AADMM) algorithm  is fully automated and  fairly insensitive to the initial stepsize, as testified for by a comprehensive set of experiments.


\section{Background and Related Work}
\subsection{ADMM}
\label{sec:admm}
ADMM dates back to the 1970s \cite{gabay1976dual, glowinski1975approximation}. Its convergence was shown in the 1990s \cite{eckstein1992douglas}, and convergence rates have been the topic of much recent work, \textit{e.g.}, by  \citet{goldstein2014fast,he2015non,nishihara2015general}. In the last decade, ADMM became one of the tools of choice to handle a wide variety of optimization problems in machine learning, signal processing, and many other areas \citep{boyd2011admm}.

ADMM tackles problems in the form
\begin{equation}
\begin{split}
&\min_{u\in \bbR^n,v\in \bbR^m}  \hspace{0.7cm} H(u) + G(v),\\
&\quad \st~~  Au+Bv = b, \label{eq:prob}
\end{split}
\end{equation}
where $H:\bbR^n\rightarrow \bar{\bbR}$ and $G:\bbR^m\rightarrow \bar{\bbR}$ are closed, proper, convex functions, $A\in \bbR^{p \times n}$, $B\in \bbR^{p \times m}$, and $b \in \bbR^p$.  With $\gl \! \in\! \bbR^p $ denoting the dual variables (Lagrange multipliers), ADMM has the form
{\small
\begin{align}
u_{k+1} =& \arg\min_{u} H(u) + \frac{\tau_k}{2} \| b-Au-Bv_k + \frac{\gl_k}{\tau_k}\|_2^2 \label{eq:updateu}\\
v_{k+1} =& \arg\min_{v} G(v) + \frac{\tau_k}{2} \| b-Au_{k+1}-Bv + \frac{\gl_k}{\tau_k}\|_2^2 \label{eq:updatev}\\
\gl_{k+1} =& \gl_{k} +\tau_k (b - A u_{k+1} -B v_{k+1}), \label{eq:updatedual}
\end{align}
}%
where the sequence of \textit{penalties} $\tau_k$ is the only free choice, and has a high impact on the algorithm's speed.  Our goal is to automate this choice, by  adaptively tuning $\tau_k$ for optimal performance.

The convergence of the algorithm can be monitored using primal and dual ``residuals,'' both of which approach zero as the iterates become more accurate, and which are  defined as
\begin{eqnarray}
\begin{split}
& r_k = b-Au_k-Bv_k, \ \\ 
&\ d_k = \tau_k A^{T}B(v_k-v_{k-1}),
\end{split}
\end{eqnarray}
respectively \citep{boyd2011admm}.  The iteration is generally stopped when
\begin{equation}
\begin{split}
&\|r_k\|_2  \leq \ge^{tol} \max\{\|Au_k\|_2, \|Bv_k\|_2, \|b\|_2 \}  \\
&\|d_k\|_2 \leq \ge^{tol} \| A^{T} \gl_k\|_2, \label{eq:stop}
\end{split}
\end{equation}
where $\ge^{tol} > 0$ is the stopping tolerance.

\subsection{Parameter tuning and adaptation}
Relatively little work has been done on automating ADMM, \textit{i.e.}, on adaptively choosing $\tau_k$.  In the particular case of a strictly convex quadratic objective, criteria for choosing an optimal constant penalty have been recently proposed by \citet{ghadimi2015optimal,raghunathan2014alternating}. \citet{lin2011linearized} proposed a non-increasing sequence for the linearization parameter in ``linearized'' ADMM; however, they do not address the question of how to choose the penalty parameter in ADMM or its variants.


Residual balancing (RB) \cite{he2000alternating,boyd2011admm} is the only available adaptive method for general form problems~\eqref{eq:prob}; it is based on the following observation: increasing $\tau_k$ strengthens the penalty term, yielding smaller primal residuals but larger dual ones; conversely, decreasing $\tau_k$ leads to larger primal and smaller dual residuals. As both residuals must be small at convergence, it makes sense to ``balance'' them, \textit{i.e.}, tune $\tau_k$ to keep both residuals of similar magnitude. A simple scheme for this goal is
\begin{equation}
\tau_{k+1} =
\begin{cases}
\eta \tau_k &~~\text{if}~~ \|r_k\|_2 > \mu \| d_k\|_2 \\
\tau_k/\eta &~~\text{if}~~ \|d_k\|_2 > \mu \| r_k\|_2\\
\tau_k &~~\text{otherwise},
\end{cases}\label{eq:RB}
\end{equation}
with $\mu > 1$ and $\eta > 1$ \cite{boyd2011admm}.  RB has recently been adapted to distributed optimization~\cite{song2015fast} and other primal-dual splitting methods \cite{goldstein2015adaptive}. ADMM with adaptive penalty is not guaranteed to converge, unless $\tau_k$ is fixed after a finite number of iterations~\cite{he2000alternating}.

Despite some practical success of the RB idea, it suffers from several flaws.  The relative size of the residuals depends on the scaling of the problem;  e.g., with the change of variable $u\gets 10 u$,  problem \eqref{eq:prob} can be re-scaled so that ADMM produces an equivalent sequence of iterates  with residuals of very different magnitudes. Consequently, RB criteria are arbitrary in some cases, and their performance varies wildly with different problem scalings (see Section \ref{sec:sensitivity}).  Furthermore, the penalty parameter may adapt slowly if the initial value is far from optimal. Finally, without a careful choice of $\eta$ and $\mu$, the algorithm may fail to converge unless adaptivity is turned off \cite{he2000alternating}.

\subsection{Dual interpretation of ADMM}
\label{sec:ADMM_DRS}
We now explain the close relationship between ADMM and \textit{Douglas-Rachdord splitting} (DRS) \cite{eckstein1992douglas,esser2009applications,goldstein2014fast}, which plays a central role in the proposed approach. The starting observation is that the dual of problem~\eqref{eq:prob} has the form
\begin{equation}
\min_{\zeta\in \bbR^p}  \underbrace{H^*(A^{T}\zeta) - \inprod{\zeta, b}}_{ \hat{H}(\zeta)} + \underbrace{G^*(B^{T}\zeta)}_{\hat{G}(\zeta)},
\label{eq:def_Hhat_Ghat}
\end{equation}
where $F^*$ denotes the Fenchel conjugate of $F$, defined as  $F^*(y) = \sup_{x} \langle x, y\rangle - F(x)$ \cite{Rockafellar}.

The DRS algorithm solves \eqref{eq:def_Hhat_Ghat} by generating two sequences $(\zeta_k)_{k\in\bbN} $ and $(\hat \zeta_k)_{k\in\bbN}$ according to
\begin{eqnarray}
0 & \in & \frac{\hat{\zeta}_{k+1}-\zeta_k}{\tau_k}  + \partial \hat H(\hat{\zeta}_{k+1}) +  \partial \hat G(\zeta_k) \label{eq:dr1}\\
0 & \in & \frac{{\zeta}_{k+1}-\zeta_k}{\tau_k} + \partial \hat H(\hat{\zeta }_{k+1}) + \partial \hat G(\zeta_{k+1}), \label{eq:dr2}
\end{eqnarray}
where we use the standard notation $\partial F(x)$ for the subdifferential of $F$ evaluated at $x$ \cite{Rockafellar}.


Referring back to ADMM in \eqref{eq:updateu}--\eqref{eq:updatedual}, and defining  $\hat{\gl}_{k+1} = \gl_{k} +\tau_k (b - A u_{k+1} -B v_{k}),$  the optimality condition for the minimization in \eqref{eq:updateu} is
\[
0   \in \partial H(u\kp) - A^T \underbrace{(\gl_k + \tau_k  ( b-Au\kp-Bv_k))}_{\hat{\gl}_{k+1}}
\]
which is equivalent to $A^T  \hat{\gl}_{k+1} \in \partial H(u\kp),$ thus\footnote{An important property relating $F$ and $F^*$ is that  $y \in \partial H(x) $ if and only if $ x\in \partial H^*(y)$ \cite{Rockafellar}.}  $u\kp \in \partial H^*(A^T  \hat{\gl}_{k+1}).$  A similar argument using the optimality condition for \eqref{eq:updatev} leads to $v_{k+1} \in  \partial G^*(B^{T}\gl_{k+1}).$ Recalling \eqref{eq:def_Hhat_Ghat}, we arrive at
{\small
\begin{align} \label{conjgrad}
 Au\kp-b \in  \partial \hat H( \hat{\gl}_{k+1}) \;\; \text{and} \;\; Bv_{k+1} \in \partial \hat{G}(\gl_{k+1}).
 \end{align}
 }%
 Using these identities, we finally have
    \begin{align}
    \hat \lambda\kp &=  \gl_{k} +\tau_k (b - A u_{k+1} -B v_{k}) \nonumber\\ & \in \lambda_k - \tau_k \bigl(  \partial \hat H( \hat{\gl}_{k+1}) + \partial \hat G( {\gl}_{k}) \bigr)
\\
     \lambda\kp &=  \gl_{k} +\tau_k (b - A u_{k+1} -B v_{k+1}) \nonumber\\ & \in  \lambda_k - \tau_k   \bigl(   \partial \hat H( \hat{\gl}_{k+1}) + \partial \hat G( {\gl}_{k+1}) \bigr),
     \end{align}
showing that the sequences $(\lambda_k)_{k\in\bbN} $ and $(\hat \lambda_k)_{k\in\bbN}$ satisfy  the same conditions  \eqref{eq:dr1} and \eqref{eq:dr2} as $(\zeta_k)_{k\in\bbN} $ and $(\hat \zeta_k)_{k\in\bbN}$, thus proving that ADMM  for problem~\eqref{eq:prob} is equivalent to DRS for its dual \eqref{eq:def_Hhat_Ghat}.

\subsection{Spectral  stepsize selection}
The classical gradient descent step for unconstrained minimization of a smooth function $F\! \! :\bbR^n\!\!\rightarrow\bbR$ has the form $x\kp = x_k - \tau_k \nabla F(x_k).$   
Spectral gradient methods, pioneered by Barzilai and Borwein (BB) \cite{barzilai1988two}, adaptively choose the stepsize $\tau_k$ to achieve fast convergence.

In a nutshell,  the standard (there are  variants) BB method  sets $\tau_k = 1/\alpha_k$, with $\alpha_k$ chosen such that   $\alpha_k I$ mimics the Hessian of $F$ over the last step, seeking a quasi-Newton step. A least squares criterion yields
{
\begin{equation}\small
\alpha_k = \arg\!\min_{\alpha\in \bbR} \|\nabla F(x_k) - \nabla F(x_{k-1}) - \alpha (x_k - x_{k-1})\|_2^2, \label{eq:def_BB}
\end{equation}
}%
which is an estimate of the curvature of $F$ across the previous step of the algorithm. BB gradient methods often dramatically outperform those with constant stepsize  \cite{fletcher2005barzilai,zhou2006gradient} and have been generalized to handle non-differentiable problems via proximal gradient methods \cite{wright2009sparse,goldstein2014field,goldstein2010high}. Finally, notice that \eqref{eq:def_BB} is equivalent to approximating the gradient $\nabla F(x_k)$ as  a linear function of $x_k$,
{
\begin{equation}\small
 \nabla F(x_k) \approx  \nabla F(x_{k-1}) + \alpha_k (x_k - x_{k-1}) = \alpha_k \, x_k + a_k, \label{eq:linear_grad}
\end{equation}
}%
where $a_k = \nabla F(x_{k-1}) -  \alpha_k \, x_{k-1} $. The observation that a local linear approximation of the gradient has an optimal parameter  equal to the inverse of the BB stepsize will play an important role below.

\section{Spectral penalty parameters}
 Inspired by the BB method, we propose a spectral  penalty parameter selection method for ADMM. We first derive a spectral stepsize rule for DRS,   and then adapt this rule to ADMM. Finally, we discuss safeguarding rules to prevent unexpected behavior when curvature estimates are inaccurate.

\subsection{Spectral stepsize for DRS}
\label{sec:dr2curv}
Consider the dual problem \eqref{eq:def_Hhat_Ghat}. Following the observation in \eqref{eq:linear_grad} about the BB method, we approximate $\partial\hat H$ and  $\partial \hat G$ at iteration $k$ as linear functions,
\begin{equation}
\partial \hat H(\hat \zeta) = \alpha_k \, \hat \zeta + \Psi_k ~~~~~\text{and}~~~~~\partial \hat G(\zeta) = \beta_k \, \zeta + \Phi_k, \label{eq:linearb2}
\end{equation}
where  $\ga_k > 0$, $\gb_k >0$ are local curvature estimates of dual functions $\hat{H}$ and $\hat{G}$, respectively, and $\Psi_k, \Phi_k \subset \bbR^p$. Once we obtain these curvature estimates, we will be able to exploit the following proposition.

\begin{proposition}[Spectral DRS]
Suppose the DRS steps \eqref{eq:dr1}--\eqref{eq:dr2} are applied to  problem \eqref{eq:def_Hhat_Ghat}, where (omitting the subscript $k$ from $\alpha_k, \beta_k, \Psi_k, \Phi_k$ to lighten the notation in what follows)
\[
\partial \hat H(\hat \zeta) = \alpha \, \hat \zeta + \Psi ~~~~~\text{and}~~~~~\partial \hat G(\zeta) = \beta \, \zeta + \Phi . \label{eq:linearb}
\]
Then, the minimal residual of $\hat{H}(\zeta_{k+1}) + \hat{G}(\zeta_{k+1})$ is obtained by setting $\tau_k =1/\sqrt{\alpha\, \beta}$.
\label{rl:spectral}
\end{proposition}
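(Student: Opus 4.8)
The plan is to substitute the affine surrogate models \eqref{eq:linearb2} into the two DRS recursions \eqref{eq:dr1}--\eqref{eq:dr2}, eliminate $\hat\zeta_{k+1}$ to get a single affine update $\zeta_{k+1}=M(\tau_k)\,\zeta_k+c(\tau_k)$, recognize $M(\tau_k)$ as the per-iteration shrinkage factor of the residual, and then minimize $M(\tau_k)$ over $\tau_k>0$.

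Concretely, I would choose fixed representatives $\psi\in\Psi$, $\phi\in\Phi$ so that the surrogate subdifferentials are the affine maps $\hat\zeta\mapsto\alpha\hat\zeta+\psi$ and $\zeta\mapsto\beta\zeta+\phi$. Plugging these into \eqref{eq:dr1} and solving the resulting scalar-coefficient equation for $\hat\zeta_{k+1}$ gives $\hat\zeta_{k+1}=\bigl[(1-\beta\tau_k)\zeta_k-\tau_k(\psi+\phi)\bigr]/(1+\alpha\tau_k)$. Substituting this into \eqref{eq:dr2} (which contains the \emph{same} term $\partial\hat H(\hat\zeta_{k+1})=\alpha\hat\zeta_{k+1}+\psi$) and solving for $\zeta_{k+1}$: after clearing denominators the $\tau_k\alpha$ cross-terms cancel and the coefficient of $\zeta_k$ collapses to $1+\alpha\beta\tau_k^2$, so $\zeta_{k+1}=M(\tau_k)\,\zeta_k+c(\tau_k)$ with $M(\tau_k)=(1+\alpha\beta\tau_k^2)/\bigl[(1+\alpha\tau_k)(1+\beta\tau_k)\bigr]$ and $c(\tau_k)=-\tau_k(\psi+\phi)/\bigl[(1+\alpha\tau_k)(1+\beta\tau_k)\bigr]$. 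Since $(1+\alpha\tau_k)(1+\beta\tau_k)=1+(\alpha+\beta)\tau_k+\alpha\beta\tau_k^2>1+\alpha\beta\tau_k^2$ for $\tau_k>0$, we have $0<M(\tau_k)<1$, so the map has the unique fixed point $\zeta^\star=-(\psi+\phi)/(\alpha+\beta)$, which is $\tau_k$-independent and is exactly the minimizer of the affine-modeled dual objective $\hat H+\hat G$ (its subdifferential $(\alpha+\beta)\zeta+\psi+\phi$ vanishes there). Subtracting $\zeta^\star$ gives $\zeta_{k+1}-\zeta^\star=M(\tau_k)\,(\zeta_k-\zeta^\star)$, hence the residual of $\hat H+\hat G$ at $\zeta_{k+1}$ — read as $\|\zeta_{k+1}-\zeta^\star\|$, or equivalently as the norm of the optimality residual $(\alpha+\beta)\zeta_{k+1}+\psi+\phi$, or (up to a constant factor) the suboptimality gap of the quadratic model — is minimized exactly when $M(\tau_k)$ is.

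It then remains to minimize $M(\tau)=(1+\alpha\beta\tau^2)/\bigl(1+(\alpha+\beta)\tau+\alpha\beta\tau^2\bigr)$ over $\tau>0$. Rewriting $M(\tau)=\bigl(1+(\alpha+\beta)\tau/(1+\alpha\beta\tau^2)\bigr)^{-1}$ reduces the task to maximizing $\tau/(1+\alpha\beta\tau^2)$, i.e.\ to minimizing $1/\tau+\alpha\beta\,\tau$; by the AM--GM inequality (or a one-line derivative computation) the minimizer is $\tau=1/\sqrt{\alpha\beta}$, which is the claimed $\tau_k=1/\sqrt{\alpha_k\beta_k}$.

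The computation is elementary once set up; the only points needing care are (i) using a single consistent representative of $\partial\hat H(\hat\zeta_{k+1})$ across both \eqref{eq:dr1} and \eqref{eq:dr2}, and absorbing the set-valued offsets $\Psi,\Phi$ into the constants $\psi,\phi$, and (ii) pinning down the intended reading of ``residual'': I take it to be the distance of $\zeta_{k+1}$ to the solution of the locally linearized dual problem, whose one-step contraction factor is precisely $M(\tau_k)$, so that the optimal spectral stepsize is the one minimizing that factor. I do not expect any genuine difficulty beyond this.
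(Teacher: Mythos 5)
Your proposal is correct and follows essentially the same route as the paper: substitute the affine models into the DRS steps, eliminate $\hat\zeta_{k+1}$ to obtain $\zeta_{k+1}$ with shrinkage factor $(1+\alpha\beta\tau^2)/\bigl((1+\alpha\tau)(1+\beta\tau)\bigr)$ multiplying the subgradient residual $(\alpha+\beta)\zeta+(a+b)$, and minimize that factor to get $\tau=1/\sqrt{\alpha\beta}$. Your fixed-point/contraction framing is just an equivalent reading of the paper's residual $r_{\mbox{\scriptsize DR}}$, since $(\alpha+\beta)\zeta_{k+1}+(a+b)=(\alpha+\beta)(\zeta_{k+1}-\zeta^\star)$.
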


\begin{proof}
Inserting \eqref{eq:linearb2} into the DRS step~\eqref{eq:dr1}--\eqref{eq:dr2} yields
\begin{align}
0 & \in  \frac{\hat{\zeta}_{k+1}-\zeta_k}{\tau}  + (\alpha \, \hat{\zeta}_{k+1} +  \Psi ) + (\beta\,  \zeta_k + \Phi), \label{eq:bbdr1}\\
0 & \in  \frac{{\zeta}_{k+1}-\zeta_k}{\tau} + (\alpha \, \hat{\zeta}_{k+1} +  \Psi ) + (\beta\,  \zeta_{k+1} + \Phi)\label{eq:bbdr2}.
\end{align}
From~\eqref{eq:bbdr1}--\eqref{eq:bbdr2}, we can explicitly get the update  for $\hat \zeta_{k+1}$ as
\begin{equation}
\hat{\zeta}_{k+1}  = \frac{1 - \beta \, \tau }{1+\alpha\, \tau} \zeta_{k} -\frac{a \tau + b \tau }{1+\alpha\, \tau} , \label{eq:tmp}
\end{equation}
where  $a \in \Psi$ and $b \in \Phi$, and for $\zeta_{k+1}$ as
\begin{align}
\zeta_{k+1} &= \frac{1}{1+\gb\, \tau} \zeta_{k} - \frac{\alpha\,\tau}{1+\beta\, \tau}\hat{\zeta}_{k+1} - \frac{a \, \tau + b \tau}{1+\beta\, \tau}  \\
&=  \frac{(1+\alpha\, \beta\, \tau^2)\zeta_{k} - (a + b) \tau}{(1+\alpha\, \tau)(1+\beta\, \tau)}, \label{eq:gl}
\end{align}
where the second equality results from  using the expression for $\hat{\zeta}_{k+1}$ in~\eqref{eq:tmp}.

The residual $r_{\mbox{\scriptsize DR}}$ at $\zeta_{k+1}$ is simply the magnitude of the subgradient
(corresponding to elements $a \in \Psi$ and $b \in \Phi$)
of the objective
that is given by
\begin{align}
\hspace{-2mm} r_{DR} & = \| (\alpha + \beta) \zeta_{k+1} + (a + b) \|_2\\
 &= \frac{1+\alpha\, \gb\, \tau^2}{(1+\alpha\, \tau)(1+\beta\, \tau)} \| (\ga+\gb) \zeta_{k} + (a + b)\|_2, \label{eq:drresiter}
\end{align}
where $\zeta_{k+1}$ in~\eqref{eq:drresiter} was substituted with~\eqref{eq:gl}. The optimal stepsize $\tau_{k}$ minimizes the residual
\begin{align}
\tau_{k} & = \arg\min_{\tau} r_{\mbox{\scriptsize DR}} =   \arg\max_{\tau} \frac{(1+\alpha\, \tau)(1+\beta\,\tau)}{1+\alpha\,\beta\,\tau^2} \\
&  =  \arg\max_{\tau} \frac{(\ga+\gb)\tau}{1+\ga\gb\tau^2}  \label{eq4}
 =  1/\sqrt{\ga\gb}  .
\end{align}
Finally (recovering the iteration subscript $k$), notice that $\tau_k = (\hat\alpha_k\, \hat \beta_k)^{1/2}$, where $\hat{\alpha}_k =1/\alpha_k$ and $\hat{\beta}_k = 1/\beta_k$ are the spectral gradient descent stepsizes for $\hat H$ and $\hat G$, at  $\hat\zeta_k$ and $\zeta_k$, respectively.
\end{proof}

Proposition~\ref{rl:spectral} shows how to adaptively choose $\tau_k$: begin by obtaining linear estimates of the subgradients of the two terms in the dual objective \eqref{eq:def_Hhat_Ghat};  the geometric mean of these  optimal gradient descent stepsizes is then the optimal DRS stepsize, thus also the optimal ADMM penalty parameter, due to the equivalence shown in Subsection~\ref{sec:ADMM_DRS}.

\subsection{Spectral stepsize estimation}
\label{sec:curvestim}

We now address the estimation of $\hat \ga_k = 1/\alpha_k$ and $\hat \gb_k = 1/\beta_k$. These curvature parameters are estimated based on the results from iteration $k$ and an older iteration $k_0<k.$ Noting \eqref{conjgrad}, we define
\begin{equation} \label{eq:du}
\begin{split}
&\gD \hat{\gl}_{k} := \hat{\gl}_k - \hat{\gl}_{k_0}   \nonumber \\
 &\gD \hat H_{k} := \partial \hat H(\hat{\gl}_k) -  \partial \hat H(\hat{\gl}_{k_0}) = A(u_{k}-u_{k_0}).
 \end{split}
\end{equation}
Assuming, as above, a linear model for $\partial \hat H$, we expect $\gD \hat H_{k} \approx \alpha\,  \gD \hat{\gl}_{k} + a.$   As is typical in BB-type methods \cite{barzilai1988two,zhou2006gradient}, $\alpha$ is estimated via one of the two least squares problems
\begin{eqnarray*}
\min_{\alpha}\| \gD \hat H_{k} - \ga \gD \hat{\gl}_{k} \|_2^2 \,\  \text{or} \ \min_{\alpha}\|  \ga^{-1} \gD  \hat H_{k} -  \gD \hat{\gl}_{k} \|_2^2.
\end{eqnarray*}
The closed form solutions for the corresponding spectral stepsizes $\hga_k = 1/\ga_k$ are, respectively,
\begin{eqnarray}
\hga_k^{\mbox{\scriptsize SD}} = \frac{\inprod{\gD \hat{\gl}_{k}, \gD \hat{\gl}_{k}}}{\inprod{\gD \hat H_{k}, \gD \hat{\gl}_{k}}}
\,\ \text{and} \,\
\hga_k^{\mbox{\scriptsize MG}} = \frac{\inprod{\gD \hat H_{k}, \gD \hat{\gl}_{k}}}{\inprod{\gD \hat H_{k}, \gD \hat H_{k}}},
\end{eqnarray}
where, following  \citet{zhou2006gradient}, SD stands for {\em steepest descent} and MG for {\em minimum gradient}. The Cauchy-Schwarz inequality implies that $\hat{\ga}_k^{\mbox{\scriptsize SD}} \geq \hat{\ga}_k^{\mbox{\scriptsize MG}}.$  Rather than choosing one or the other, we suggest the hybrid stepsize rule proposed by \citet{zhou2006gradient},
\begin{eqnarray}
\hat{\ga}_k =
\begin{cases}
\hat{\ga}_k^{\mbox{\scriptsize MG}}&~~\text{if}~~2 \,\hat{\ga}_k^{\mbox{\scriptsize MG}} > \hat{\ga}_k^{\mbox{\scriptsize SD}} \\
\hat{\ga}_k^{\mbox{\scriptsize SD}} - \hat{\ga}_k^{\mbox{\scriptsize MG}} /2  &~~\text{otherwise.}
\end{cases}\label{eq:alpha}
\end{eqnarray}
The spectral stepsize $\hat{\gb}_k = 1/\gb_k $  is similarly set to
 \begin{align}
 \quad\hat{\gb}_k =
 \begin{cases}
 \hat{\gb}_k^{\mbox{\scriptsize MG}} & ~~\text{if}~~2\, \hat{\gb}_k^{\mbox{\scriptsize MG}} > \hat{\gb}_k^{\mbox{\scriptsize SD}}  \\
 \hat{\gb}_k^{\mbox{\scriptsize SD}} - \hat{\gb}_k^{\mbox{\scriptsize MG}}/2 & ~~\text{otherwise},
 \end{cases}\label{eq:beta}
 \end{align}
where ${\hat{\gb}_k^{\mbox{\scriptsize SD}}} = \inprod{\gD \gl_{k}, \gD \gl_{k}}/\inprod{\gD \hat G_{k}, \gD \gl_{k}}$,  $\hat{\gb}_k^{\mbox{\scriptsize MG}} = \inprod{\gD \hat G_{k}, \gD \gl_{k}}/\inprod{\gD \hat G_{k}, \gD \hat G_{k}}$, $ \gD \hat G_{k} = B(v_k-v_{k_0})$, and $\gD \gl_k = \gl_k -\gl_{k_0}$. It is important to note that  $\hat{\ga}_k $ and $\hat{\gb}_k$ are obtained from the iterates of ADMM, \textit{i.e.}, the user is not required to supply the dual problem.

\subsection{Safeguarding} 
On some iterations, the linear models (for one or both subgradients) underlying the spectral stepsize choice may be very inaccurate.  When this occurs, the least squares procedure may produce ineffective stepsizes.  The classical BB method for unconstrained problems uses a line search to safeguard against unstable stepsizes resulting from unreliable curvature estimates. In ADMM, however, there is no notion of ``stable'' stepsize (any constant stepsizes is stable),  thus line search methods are not applicable.  Rather, we propose to safeguard the method by assessing the quality of the curvature estimates, and only updating the stepsize if the curvature estimates satisfy a reliability criterion.

The linear model \eqref{eq:linearb2} assumes the change in dual (sub)gradient is linearly proportional to the change in the dual variables. To test the validity of this assumption, we measure the correlation between these quantities (equivalently, the cosine of their angle):
{
\begin{equation}\small
\ga^{\mbox{\scriptsize cor}}_k = \frac{\inprod{\gD \hat H_{k}, \gD \hat{\gl}_{k}}}{ \| \gD \hat H_{k}\| \, \| \gD \hat{\gl}_{k}\| } \,\ \text{and} \,\
\gb^{\mbox{\scriptsize cor}}_k = \frac{\inprod{\gD \hat G_{k}, \gD \gl_{k}}}{ \| \gD \hat G_{k}\| \, \| \gD \gl_{k}\| }. \label{eq:corr}
\end{equation}
}%
The spectral stepsizes are updated only if the correlations indicate the estimation is credible enough. The safeguarded spectral adaptive penalty rule is
{
\begin{align}\small
\tau_{k} =
\begin{cases}
\sqrt{\hat{\ga}_{k} \hat{\gb}_{k}} &\text{if}~ \ga^{\mbox{\scriptsize cor}}_k > \ge^{\mbox{\scriptsize cor}}~\text{and}~\gb^{\mbox{\scriptsize cor}}_k > \ge^{\mbox{\scriptsize cor}}\\
\hat{\ga}_{k} &\text{if}~ \ga^{\mbox{\scriptsize cor}}_k > \ge^{\mbox{\scriptsize cor}}~\text{and}~\gb^{\mbox{\scriptsize cor}}_k \leq \ge^{\mbox{\scriptsize cor}}\\
\hat{\gb}_{k} &\text{if}~ \ga^{\mbox{\scriptsize cor}}_k \leq \ge^{\mbox{\scriptsize cor}}~\text{and}~\gb^{\mbox{\scriptsize cor}}_k > \ge^{\mbox{\scriptsize cor}}\\
\tau_{k-1} &\text{otherwise},
\end{cases}~\label{eq:final}
\end{align}
}%
where $\ge^{\mbox{\scriptsize cor}}$ is a quality threshold for the curvature estimates, while $\hat{\ga}_{k}$ and $\hat{\gb}_k$ are the  stepsizes given by~\eqref{eq:alpha}--\eqref{eq:beta}. 
 Notice that \eqref{eq:final} falls back to constant $\tau_k$ when both curvature estimates are deemed inaccurate.

\subsection{Adaptive ADMM}
Algorithm~\ref{alg} shows the complete \textit{adaptive ADMM} (AADMM). We suggest only updating the stepsize every $T_f$ iterations.
 Safeguarding threshold $\ge^{\mbox{\scriptsize cor}}=0.2$ and  $T_{f}=2$ generally perform well. 
The overhead of AADMM over ADMM is modest: only a few inner products plus the storage to keep one previous iterate.

\begin{algorithm}
	\caption{
		Adaptive ADMM (AADMM)
	}
	\label{alg}
	\begin{algorithmic}
		\REQUIRE  initialize $v_0$, $\gl_0$, $\tau_0$, $k_0=0$
		\WHILE{not converge by \eqref{eq:stop} \textbf{and} $k <\text{maxiter}$}
		\STATE {\small $u_{k+1} = \arg\min_{u} H(u) + \frac{\tau_k}{2} \| b-Au-Bv_k + \frac{\gl_k}{\tau_k}\|_2^2 $}
		\STATE {\small $v_{k+1} = \arg\min_{v} G(v) + \frac{\tau_k}{2} \| b-Au_{k+1}-Bv + \frac{\gl_k}{\tau_k}\|_2^2 $}
		\STATE {\small $\gl_{k+1} \gets \gl_{k} +\tau_k (b - A u_{k+1} -B v_{k+1})$}
		\IF{$\text{mod}(k, T_{f}) = 1$}
		\STATE $\hgl_{k+1} = \gl_{k} +\tau_k (b - A u_{k+1} -B v_{k})$
		\STATE {\small Estimate spectral stepsizes $\hga_{k+1},\hgb_{k+1}$ in (\ref{eq:alpha}, \ref{eq:beta})}
		\STATE {\small Estimate correlations $\ga_{k+1}^{\mbox{\scriptsize cor}},\gb_{k+1}^{\mbox{\scriptsize cor}}$ in \eqref{eq:corr}}
		\STATE {\small Update $\tau_{k+1}$ in \eqref{eq:final}}
		\STATE  $k_0 \gets k$
		\ELSE
		\STATE $\tau_{k+1} \gets \tau_{k}$
		\ENDIF
		\STATE $k \gets k+1$
		\ENDWHILE
	\end{algorithmic}
\end{algorithm}

\subsection{Convergence}
\label{sec:convgr}
\citet{he2000alternating} proved that convergence is guaranteed for ADMM with adaptive penalty when either of the two following conditions are satisfied:
\begin{condition}[Bounded increasing]\label{as1}
	{\small
		\begin{align}
			\sum_{k=1}^{\infty} (\eta\itk)^2 < \infty,
			\ \text{where} \ \eta\itk =
			\sqrt{\max\{\frac{\tau\itk}{\tau\km}, \, 1\}-1}.
		\end{align}
	}%
\end{condition}

\begin{condition}[Bounded decreasing]\label{as2}
	{\small
		\begin{align}
			\sum_{k=1}^{\infty} (\gt\itk)^2 < \infty,
			\ \text{where} \ \gt\itk =
			\sqrt{\max\{\frac{\tau\km}{\tau\itk}, \, 1\}-1}.
		\end{align}
	}%
\end{condition}
\cref{as1} (\cref{as2}) suggests that increasing (decreasing) of adaptive penalty is bounded. In practice, these conditions can be satisfied by turning off adaptivity after a finite number of steps, which we have found unnecessary  in our experiments with AADMM.

\section{Experiments}

\begin{table*}[t]
\centering
\caption{\small Iterations (and runtime in seconds) for the various algorithms and applications described in the text. Absence of convergence after $n$ iterations is indicated as $n+$. AADMM is the proposed Algorithm~\ref{alg}.
}
\setlength{\tabcolsep}{3pt}
\small
\begin{threeparttable}
\begin{tabular}{c|c|c||c|c|c|>{\bfseries}c}
\hline
Application & Dataset & \tabincell{c}{\#samples $\times$ \\ \#features\tnote{1}} & \tabincell{c}{Vanilla\\ ADMM} & \tabincell{c}{Fast \\ADMM} & \tabincell{c}{Residual\\ balance} & \tabincell{c}{Adaptive \\ADMM}\\
\hline\hline
\multirow{6}{*}{\tabincell{c}{Elastic net\\regression}} & Synthetic & 50 $\times$ 40 & 2000+ (1.64) & 263 (.270)  & 111 (.129) & 43 (.046) \\
& Boston & 506 $\times$ 13 & 2000+ (2.19) & 208 (.106) & 54 (.023) & 17 (.011) \\
& Diabetes & 768 $\times$ 8 & 594 (.269) & 947 (.848) & 28 (.020) & 10 (.005) \\
& Leukemia & 38 $\times$ 7129 & 2000+ (22.9) & 2000+ (24.2) & 1737 (19.3) & 152 (1.70) \\
& Prostate & 97 $\times$ 8 & 548 (.293) & 139 (.049) & 29 (.015) & 16 (.012) \\
& Servo & 130 $\times$ 4 & 142 (.040) & 44 (.017) & 27 (.012) & 13 (.007) \\
\hline
\multirow{4}{*}{\tabincell{c}{Low rank \\least squares}} & Synthetic & 1000 $\times$ 200 & 543(31.3) & 129(7.30) & 75(5.59) & 13(.775) \\
& Madelon & 2000 $\times$ 500 & 1943(925) & 193(89.6) & 133(60.9) & 27(12.8) \\
& Sonar & 208 $\times$ 60 & 1933(9.12) & 313(1.51) & 102(.466) & 31(.160) \\
& Splice & 1000 $\times$ 60 & 1704(38.2) & 189(4.25) & 92(2.04) & 18(.413) \\
\hline
\multirow{4}{*}{\tabincell{c}{QP and \\dual SVM}} & Synthetic & 250 $\times$ 500 & 439 (6.15) & 535 (7.8380) & 232 (3.27) & 71 (.984) \\
& Madelon & 2000 $\times$ 500 & 100 (14.0) & 57 (8.14) & 28 (4.12) & 19 (2.64) \\
& Sonar & 208 $\times$ 60 & 139 (.227) & 43 (.075) & 37 (.069) & 28 (.050) \\
& Splice & 1000 $\times$ 60 & 149 (4.9) & 47 (1.44) & 39 (1.27) & 20 (.681) \\
\hline
\multirow{4}{*}{\tabincell{c}{Basis \\pursuit}} & Synthetic &  10 $\times$ 30   & 163 (.027) & 2000+ (.310) & 159 (.031) & 114 (.026) \\
& Human1 & 1024 $\times$ 1087 & 2000+ (2.35) & 2000+ (2.41) & 839 (.990) & 503 (.626) \\
& Human2 & 1024 $\times$ 1087 & 2000+ (2.26) & 2000+ (2.42) & 875 (1.03) & 448 (.554) \\
& Human3 & 1024 $\times$ 1087 & 2000+ (2.29) & 2000+ (2.44) & 713 (.855) & 523 (.641) \\
\hline
\multirow{7}{*}{\tabincell{c}{Consensus \\ logistic \\regression}} & Synthetic & 1000 $\times$ 25 & 301 (3.36) & 444 (3.54) & 43 (.583) & 22 (.282) \\
& Madelon & 2000 $\times$ 500 & 2000+ (205) & 2000+ (166) & 115 (42.1) & 23 (20.8) \\
& Sonar & 208 $\times$ 60 & 2000+ (33.5) & 2000+ (47) & 106 (2.82) & 90 (1.64) \\
& Splice & 1000 $\times$ 60 & 2000+ (29.1) & 2000+ (43.7) & 86 (1.91) & 22 (.638) \\
& News20 & 19996 $\times$ 1355191 & 69 (5.91e3) & 32 (3.45e3) & 18 (1.52e3) & 16 (1.2e3) \\
& Rcv1 & 20242 $\times$ 47236 & 38 (177) & 23 (122) & \textbf{13 (53.0)} & 12 (53.9) \\
& Realsim & 72309 $\times$ 20958 & 1000+ (2.73e3) & 1000+ (1.86e3) & 121 (558) & 22 (118) \\
\hline
\multirow{6}{*}{\tabincell{c}{Semidefinite \\ programming}}
& hamming-7-5-6 & 128 $\times$ 1792 & 455(1.78) & 2000+(8.60) & 1093(4.21) & 284(1.11) \\
& hamming-8-3-4 & 256 $\times$ 16128 & 418(6.38) & 2000+(29.1) & 1071(16.5) & 118(2.02) \\
& hamming-9-5-6 & 512 $\times$ 53760 & 2000+(187) & 2000+(187) & 1444(131) & 481(53.1) \\
& hamming-9-8 & 512 $\times$ 2304 & 2000+(162) & 2000+(159) & 1247(97.2) & 594(52.7) \\
& hamming-10-2 & 1024 $\times$ 23040 & 2000+(936) & 2000+(924) & 1194(556) & 391(193) \\
& hamming-11-2 & 2048 $\times$ 56320 & 2000+(6.43e3) & 2000+(6.30e3) & 1203(4.15e3)  & 447(1.49e3) \\
\hline
\end{tabular}%
\label{tab:exp}%
\begin{tablenotes}
    \item[1] \#constrains $\times$ \#unknowns for canonical QP; \#vertices $\times$ \#edges for SDP.
\end{tablenotes}
\end{threeparttable}
\end{table*}%

\subsection{Experimental setting}
We consider several applications to demonstrate the effectiveness of the proposed AADMM. We focus on statistical  problems involving non-differentiable objectives:
linear regression with elastic net regularization~\cite{efron2004least,goldstein2014fast}, low rank least squares~\cite{yang2013linearized,xu2015bmvc}, quadratic programming (QP)~\cite{boyd2011admm,ghadimi2015optimal,goldstein2014fast,raghunathan2014alternating}, basis pursuit~\cite{boyd2011admm,goldstein2014fast},  consensus $\ell_1$-regularized logistic regression~\cite{boyd2011admm}, and semidefinite programming~\cite{burer2003nonlinear,wen2010alternating}. We use both synthetic and benchmark datasets (obtained from the UCI repository and the LIBSVM page) used  by~\citet{efron2004least,lee2006efficient,liu2009large,schmidt2007fast,wright2009sparse}, and \citet{zou2005regularization}.
For the small and medium sized datasets, the features are  standardized to zero mean and unit variance, whereas for the large and sparse datasets the features are scaled to be in $[-1,\, 1]$.

For comparison, we implemented \textit{vanilla} ADMM (fixed stepsize), fast ADMM with a restart strategy~\cite{goldstein2014fast}, and  ADMM with residual balancing~\cite{boyd2011admm,he2000alternating}, using \eqref{eq:RB} with $\mu = 10$ and $\eta=2$, and adaptivity was turned off after 1000 iterations to guarantee convergence. The proposed AADMM is implemented as shown in Algorithm~\ref{alg}, with fixed parameters $\ge^{\mbox{\scriptsize cor}}=0.2$ and $T_{f}=2$.

We set the stopping tolerance to $\ge^{tol} = 10^{-5}, 10^{-3},$ and $0.05$ for small, medium, and large scale problems, respectively. The initial penalty $\tau_0=0.1$ is used for all problems, except the canonical QP, where $\tau_0$ is set to the value proposed for quadratic problems by~\citet{raghunathan2014alternating}.
For each problem, the same randomly generated  initial variables  $v_0, \gl_0$ are used for ADMM and all the variants thereof.

\subsection{Applications}
\label{sec:application}
\textbf{Elastic net (EN)} is a modification of $\ell_1$-regularized linear regression (a.k.a. LASSO) that helps preserve groups of highly correlated variables~\cite{zou2005regularization,goldstein2014fast} and requires solving
\begin{eqnarray}
\min_x \frac{1}{2} \| Dx - c \|_2^2 + \rho_1 \|x\|_1 + \frac{\rho_2}{2} \| x\|_2^2, \label{eq:enet}
\end{eqnarray}
where, as usual, $\|\cdot\|_1 $ and $\|\cdot \|_2$ denote the $\ell_1$ and $\ell_2$ norms, $D$ is a data matrix, $c$ contains measurements, and $x$ is the vector of regression coefficients. One way to apply ADMM to this problem is to rewrite it as
\begin{equation}
\begin{split}
&\min_{u,v} \frac{1}{2} \| Du - c \|_2^2 + \rho_1 \|v\|_1 + \frac{\rho_2}{2} \| v\|_2^2
\\
&\st~~u-v=0.
\end{split}
\end{equation}
The synthetic dataset introduced by~\citet{zou2005regularization} and realistic dataset introduced by~\citet{efron2004least,zou2005regularization} are investigated. Typical parameters $\rho_1 = \rho_2=1$ are used in all experiments.

\textbf{Low rank least squares (LRLS)} uses the nuclear matrix norm (sum of singular values) as the convex surrogate of matrix rank,
\begin{eqnarray}
\min_{X} \frac{1}{2} \|DX - C\|_F^2 + \rho_1 \| X \|_* + \frac{\rho_2}{2} \| X\|_F^2, \label{LRLS}
\end{eqnarray}
where $\|\cdot\|_* $ denotes the nuclear norm, $\|\cdot\|_F $ is the Frobenius norm, $D\in \bbR^{n \times m}$ is a data matrix, $C\in \bbR^{n \times d}$ contains measurements, and $X\in \bbR^{m \times d}$ is the variable matrix.  ADMM can be applied after rewriting \eqref{LRLS} as \citet{yang2013linearized,xu2015bmvc}
\begin{equation}
\begin{split}
& \min_{U, V} \frac{1}{2} \|DU - C\|_F^2 + \rho_1 \| V \|_* + \frac{\rho_2}{2} \| V\|_F^2,\\
& \st~~U-V=0.
\end{split}
\end{equation}
A synthetic problem is constructed using a random data matrix $D \in \bbR^{1000\times 200}$, a low rank matrix $X \in \bbR^{200 \times 500}$, and $C = DW + \text{Noise}$. We use the binary classification problems introduced by~\citet{lee2006efficient} and \citet{schmidt2007fast}, where  each column of $X$ represents a linear exemplar classifier, trained with a positive sample and all negative samples~\citep{xu2015bmvc}; $\rho_1 = \rho_2 = 1$ is used for all experiments.

\textbf{Support vector machine (SVM) and QP:} the dual  of the SVM learning problem is a QP
\begin{equation}
\begin{split}
&\min_{z} \, \frac{1}{2} z^{T}Qz - e^{T}z \\
&\st~~ c^{T}z = 0 ~~\mbox{and} ~~ 0 \leq z \leq C,
\end{split}
\end{equation}
where $z$ is the SVM dual variable, $Q$ is the kernel matrix, $c$ is a vector of labels,  $e$ is a vector of ones, and  $C > 0$~\cite{chang2011libsvm}. We also consider the canonical QP
\begin{eqnarray}
\min_x \, \frac{1}{2}x^{T}Qx + q^{T}x~~~~\st~~Dx \leq c, \label{eq:qp}
\end{eqnarray}
which can be solved by applying ADMM to
\begin{equation}
\begin{split}
&\min_{u,v} \, \frac{1}{2}u^{T}Qu + q^{T} u + \iota_{ \{z:\, z_i \leq c\} } (v)\\
&\st~~Du - v =0 ; \label{eq:qpadmm}
\end{split}
\end{equation}
here, $\iota_{S}$ is the indicator function of set $S$: $\iota_{S}(v) = 0$, if $v\in S$, and $\iota_{S}(v) = \infty$, otherwise.

We study classification problems from~\citet{lee2006efficient} and \citet{schmidt2007fast} with $C=1$, and a random synthetic QP \citep{goldstein2014fast}, where $Q\in\bbR^{500 \times 500}$ with condition number $\simeq 4.5\times 10^5$.

\textbf{Basis pursuit (BP)} seeks a sparse representation of a vector $c$ by solving the constrained  problem
\begin{eqnarray}
\min_{x} \| x \|_1 ~~~~\st~~ Dx=c, \label{eq:basis_pursuit}
\end{eqnarray}
where $D \in \bbR^{m \times n}\!, c\in \bbR^m\!, m<n$.
An extended form with $\hat{D} = [D, I] \in \bbR^{m \times (n+m)}$ has been used to reconstruct occluded and corrupted faces~\cite{wright2009robustface}.
To apply ADMM,  problem \eqref{eq:basis_pursuit} is rewritten as
\begin{eqnarray}
\min_{u,v} \iota_{\{z:\, Dz=c\} }(u) + \| v \|_1~~~~\st~~u - v =0.
\end{eqnarray}
We experiment with synthetic random $D \in \bbR^{10 \times 30}.$  We also use a data matrix for face reconstruction from the Extended Yale B Face dataset~\cite{wright2009sparse}, where each frontal face image is scaled to $32 \times 32$. For each human subject, an image is selected and corrupted with $5\%$ noisy pixels, and the remaining  images from the same subject are used to reconstruct the corrupted image.

\textbf{Consensus $\ell_1$-regularized logistic regression} is formulated as a distribute optimization problem with the form
\begin{equation}
\begin{split}
&\min_{x_i, z} \sum_{i=1}^{N} \sum_{j=1}^{n_i} \log(1+\exp(-c_jD^T_jx_i)) + \rho \|z\|_1 \\
&\st~~ x_i - z = 0, i=1,\ldots,N,
\end{split}
\end{equation}
where $x_i \in \bbR^m$ represents the local variable on the $i$th distributed node, $z$ is the global variable, $n_i$ is the number of samples in the $i$th block, $D_j \in \bbR^m$ is the $j$th sample, and $c_j\in \{-1,1\}$ is the corresponding label. The goal of this example is to test AADMM also in distributed/consensus problems, for which ADMM has become an important tool~\cite{boyd2011admm}. 

A  synthetic problem is constructed with Gaussian random data and sparse ground truth solutions. 
Binary classification problems from~\citet{lee2006efficient,liu2009large}, and \citet{schmidt2007fast} are also used to test the effectiveness of the proposed method. We use
 $\rho=1$, for small and medium datasets, and $\rho=5$  for the large datasets to encourage sparsity. We  split the data equally into two blocks and use a loop to simulate the distributed computing of consensus subproblems.

\textbf{Semidefinite programming (SDP)} solves the problem
\begin{eqnarray}
\min_X \inprod{F, X} ~~\st X \succeq 0, ~~\mcD(X) = c,  \label{eq:sdp}
\end{eqnarray}
where $X \!\!\! \succeq \!\!\! 0$ means that $X$ is positive semidefinite, $X, \, F, \, D_i \in \bbR^{n \times n}$ are symmetric matrices, inner product $\inprod{X, Y} = \text{trace}(X^T Y)$, and $\mcD(X) = (\inprod{D_1, X}, \ldots, \inprod{D_m, X})^T$. ADMM is applied to the dual form of \eqref{eq:sdp},
\begin{eqnarray}
\min_{y, S} ~-c^T y  ~~\st \mcD^*(y) + S = F,~~S \succeq 0, \label{eq:sdp_dual}
\end{eqnarray}
where $\mcD^*(y) = \sum_{i=1}^{m} y_i D_i$, and $S$ is a symmetric positive semidefinite matrix.

As test data, we use 6 graphs from the \textit{Seventh DIMACS Implementation Challenge on Semidefinite and Related Optimization Problems} (following   \citet{burer2003nonlinear}). 

\begin{figure}[t]
\centerline{
\includegraphics[width=0.8\linewidth]{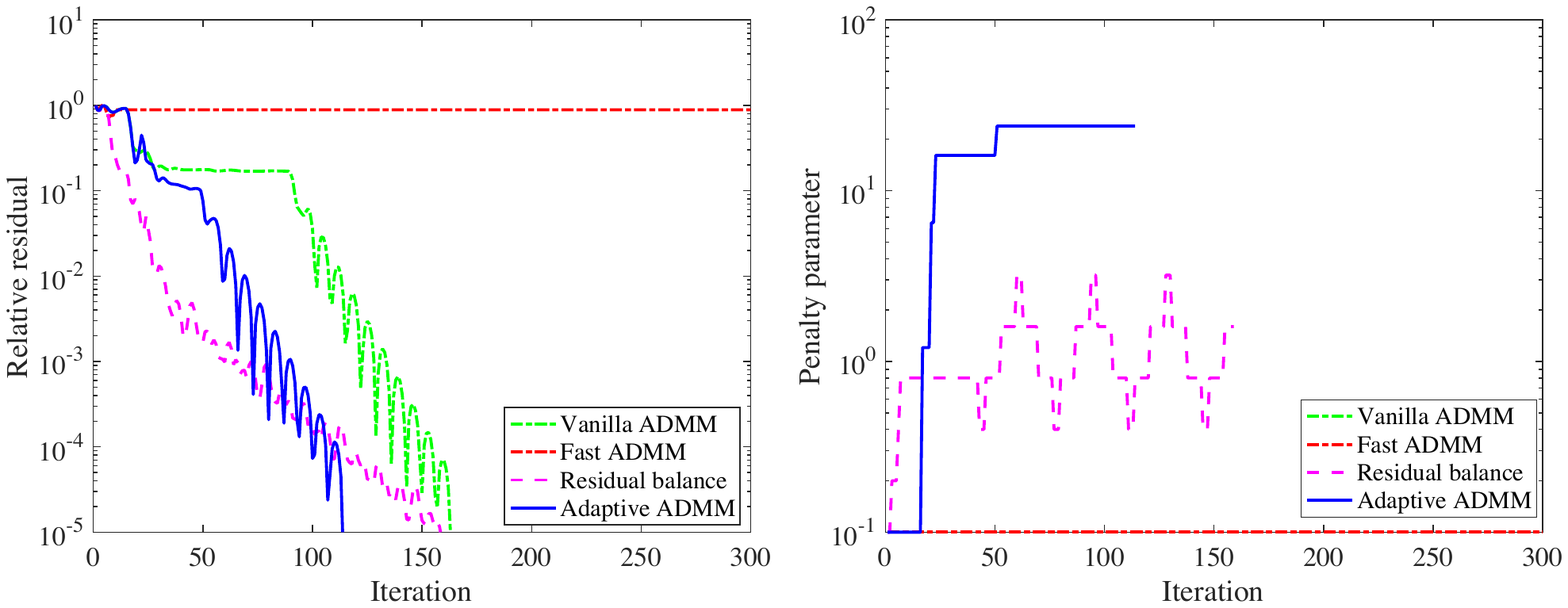}
}
\centerline{
\includegraphics[width=0.8\linewidth]{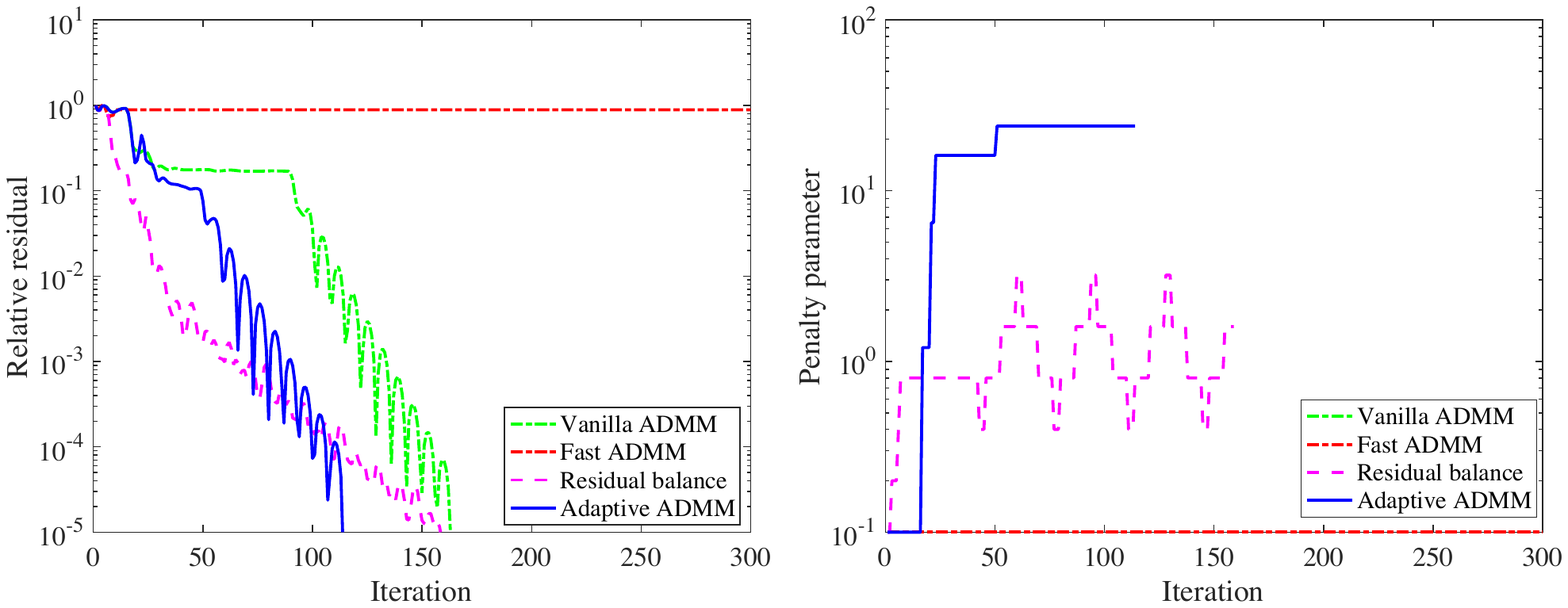}
}
\caption{\small Relative residual (top) and penalty parameter (bottom) for the synthetic basis pursuit (BP) problem.}
\label{fig:res}
\end{figure}

\subsection{Convergence results}

Table~\ref{tab:exp} reports the convergence speed of ADMM and its variants for the applications described in Subsection~\ref{sec:application}.
Vanilla ADMM with fixed stepsize does poorly in practice: in 13 out of 23 realistic datasets, it fails to converge  in the maximum number of iterations.
Fast ADMM~\cite{goldstein2014fast} often outperforms vanilla ADMM, but does not compete with the proposed AADMM,  which  also  outperforms residual balancing in all test cases except in the Rcv1 problem for consensus logistic regression.

\cref{fig:res} presents the relative residual (top) and penalty parameter (bottom) for the synthetic BP problem. The relative residual is defined as 
\[
\max \left\{ \frac{\|r_k\|_2}{\max\{\|Au_k\|_2, \|Bv_k\|_2, \|b\|_2 \} }, \frac{\|d_k\|_2 }{ \| A^{T} \gl_k\|_2 } \right\},
\]
 which is  based on stopping criterion \eqref{eq:stop}. Fast ADMM often restarts and is slow to converge. The penalty parameter chosen by RB oscillates. AADMM quickly adapts the penalty parameter and converges fastest.

\begin{figure*}[hbtp]
\centerline{
\includegraphics[width=\linewidth]{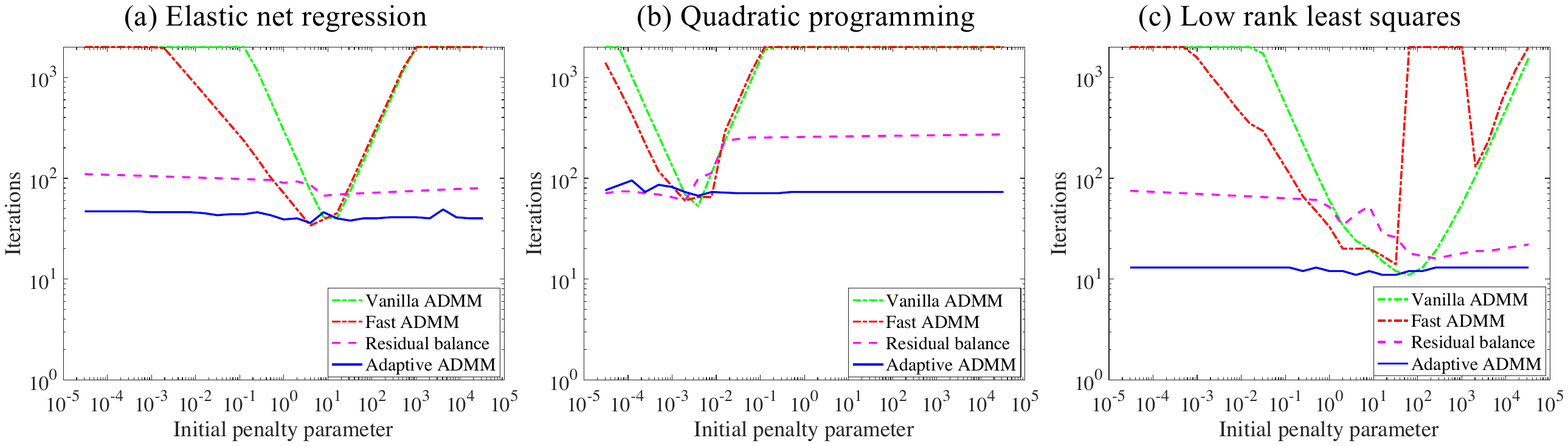}
}
\vspace{0.3cm}
\centerline{
\includegraphics[width=\linewidth]{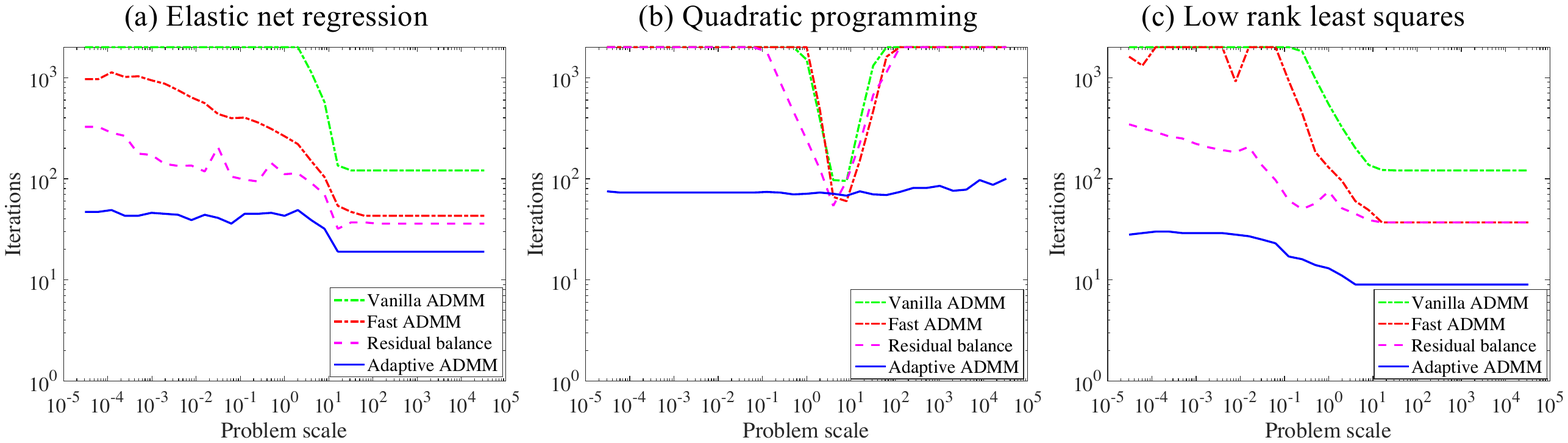}
}
\vspace{-0.2cm}
\caption{\small Top row: sensitivity of convergence speed to initial penalty parameter $\tau_0$ for EN, QP, and LRLS. Bottom row: sensitivity to problem scaling $s$ for EN, QP, and LRLS. }
\label{fig:tau}
\end{figure*}

\subsection{Sensitivity
} \label{sec:sensitivity}
We study the sensitivity of the different ADMM variants to problem scaling and  initial penalty parameter ($\tau_0$). Scaling sensitivity experiments were done by multiplying  the measurement vector $c$ by a scalar $s$. Fig.~\ref{fig:tau}  presents iteration counts for a wide range of values of initial penalty $\tau_0$ (top) and problems scale $s$ (bottom) for EN regression, canonical QP, and LRLS with synthetic datasets.  Fast  ADMM and vanilla ADMM use the fixed initial penalty parameter $\tau_0$, and are highly sensitive to this choice, as shown in Fig.~\ref{fig:tau}; in contrast, AADMM  is very stable with respect to $\tau_0$ and the scale $s$.

Finally, \cref{fig:corr} presents iteration counts when applying AADMM with various safeguarding correlation thresholds $\ge^{{\scriptsize \text{cor}}}.$  When $\ge^{{\scriptsize \text{cor}}} =0$ the new penalty value is always accepted, and when $\ge^{{\scriptsize \text{cor}}} \! =\! 1$ the penalty parameter is never changed.  The proposed AADMM method is insensitive to $\ge^{{\scriptsize \text{cor}}}$ and performs well for a wide range of $\ge^{{\scriptsize \text{cor}}} \in [0.1, \, 0.4]$ for various applications.

\begin{figure}[t]
\vspace{-0.1cm}
\centerline{
\includegraphics[width=0.8\linewidth]{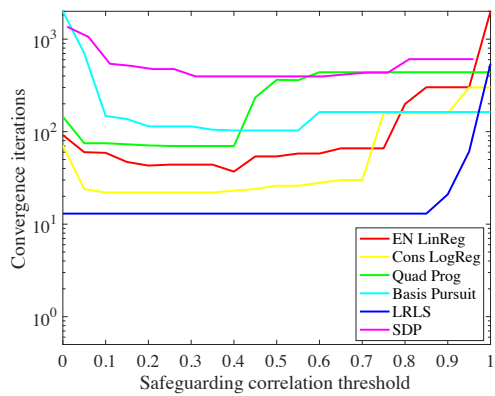}
\vspace{-1mm}}
\caption{\small Sensitivity of convergence speed to safeguarding threshold $\ge^{{\scriptsize \text{cor}}}$ for proposed AADMM. Synthetic problems of various applications are studied. Best viewed in color.}
\label{fig:corr}
\vspace{0.2cm}
\end{figure}

\section{Conclusion}
We have proposed \textit{adaptive ADMM} (AADMM), a new variant of the popular ADMM algorithm that tackles one of its fundamental drawbacks:  critical dependence on a penalty parameter that needs careful tuning. This drawback has made ADMM difficult to use by non-experts, thus AADMM has the potential to contribute to wider and easier applicability of this highly flexible and efficient optimization tool.  Our approach imports and adapts the Barzilai-Borwein ``spectral'' stepsize method from the smooth optimization literature, tailoring it to the more general class of problems handled by ADMM.  The cornerstone of our approach is the fact that ADMM is equivalent to Douglas-Rachford splitting (DRS) applied to the dual problem, for which we develop a spectral stepsize selection rule; this rule is then translated into a criterion to select the penalty parameter of ADMM. A safeguarding function that avoids unreliable stepsize choices finally yields  AADMM. Experiments on a comprehensive range of problems and datasets have shown that AADMM outperforms other variants of ADMM  and is robust with respect to initial parameter choice and problem scaling.

\subsubsection*{Acknowledgments}
TG and ZX were supported by the US Office of Naval Research (N00014-17-1-2078), and by the US National Science Foundation (CCF-1535902). MF was partially supported by the Funda\c{c}\~{a}o para a
Ci\^{e}ncia e Tecnologia, grant UID/EEA/5008/2013. 


\bibliographystyle{plainnat}
\bibliography{admm}
\end{document}